\newtheorem{theorem}{Theorem}[section]
\newtheorem{corollary}{Corollary}[theorem]
\newtheorem{lemma}{Lemma}[section]
\definecolor{mintbg}{rgb}{.63,.79,.95}
\colorlet{lightmintbg}{mintbg!50}
\colorlet{lightlightgray}{lightgray!40}
\colorlet{lightyellow}{yellow!40}
\colorlet{lightgreen}{LimeGreen!40}
\title{SVDq: 1.25-bit and 410$\times$ Key Cache Compression for LLM Attention
}
\author{
  Yankun Hong, Xing Li, Hui-Ling Zhen, Xianzhi Yu, Wulong Liu, Mingxuan Yuan \\
  Huawei Noah's Ark Lab \\
  \texttt{\{hongyankun, li.xing2, zhenhuiling2, yuxianzhi, liuwulong} \\ \texttt{yuan.mingxuan\}@huawei.com} \\
}
\begin{document}
\maketitle

\begin{abstract}
For the efficient inference of Large Language Models (LLMs), the effective compression of key-value (\(KV\)) cache is essential. Three main types of \(KV\) cache compression techniques, namely sparsity, channel compression, and quantization, have been identified. This study presents SVDq, a Singular Value Decomposition (SVD) - based mixed precision quantization method for \(K\) cache. Initially, \(K\) cache is transformed into “latent channels” using SVD basis representations. Since the values in latent channels decay rapidly and become negligible after only a few latent channels, our method then incorporates importance-aware quantization and compression for latent channels. This enables the effective allocation of higher precision to more significant channels. Theoretically, we prove that SVDq results in quantization errors (\(\times0.1\) or even lower) that are much lower than those of per-channel key quantization in the original space. Our findings based on RULER and LongBench benchmarks demonstrate that SVDq can achieve an equivalent key cache precision \textbf{as low as \(\textbf{1.25}\)-bit}. When combined with key sparsity, it can reach a key compression ratio of up to \textbf{\(\textbf{410}\times\) for attention computation}, all while maintaining comparable model performance. Notably, our method is nearly \textbf{lossless} for LongBench datasets. This indicates that SVDq enables high-precision low-bit quantization, providing a more efficient solution for \(KV\) cache compression in LLMs.
\end{abstract}

\section{Introduction}\label{sec:introduction}

Large Language Models (LLMs) have started a new era of artificial intelligence by demonstrating remarkable capabilities in handling complex tasks \citep{openai_gpt-4_2024, grattafiori_llama_2024, qwen_qwen25_2025, deepseek-ai_deepseek-r1_2025}. Most of these recently developed LLMs are founded upon the attention mechanism based auto-regressive decoder transformers \citep{vaswani_attention_2023}. Consequently, they need to encode past information into intermediate hidden tensors, specifically $KV$ caches, for subsequent and efficient inference.

However, in natural language tasks with large batches or long contexts, $KV$ cache often expands significantly in size, posing a significant challenge to fast inference \citep{pope_efficiently_2022, liu_kivi_2023}. The substantial memory consumption and latency required to save and load $KV$ cache, coupled with the computational demands of attention operations, become critical bottlenecks for LLM inference. 
Considering the rapid advancement of computability and the increasing demand for efficient LLM inference, we recognize the importance of high-ratio $KV$ cache compression (even with a slight concession in computational overhead), enabling the inference of LLMs on devices with limited memory. 

Existing approaches to $KV$ cache compression can be categorized into three main directions: sequence-axis compression, channel-axis compression, and digit-type compression. \emph{(i)} Sequence-axis compression, exemplified by works such as \cite{xiao2023streamingllm, zhang2024h2o, ge2023fastgen, li2024snapkv, tang_quest_2024, ribar_sparq_2024, singhania_loki_2024, yang2025attentionpredictor}, often referred to as sparsity, involves identifying and discarding unimportant tokens for attention computation. 
\emph{(ii)} Channel-axis compression, as demonstrated in, \emph{e.g.}, \cite{xu_think_2024, liu2024deepseekv3, sun_shadowkv_2024}, focuses on the channel dimension compression of $KV$ cache with methods like truncating and low-rank decomposition. Notably, low-rank approximation techniques, as explored in \cite{wang_svd-llm_2024, zhang_lorc_2024}, represent a similar approach of this category. These methods transform $KV$ cache into "\textit{latent channels}" representation based on SVD, and then discard insignificant latent channels. 
\emph{(iii)} Digit-type compression, also known as quantization, aims to reduce the memory footprint by employing lower-precision representations for $KV$ cache \citep{liu_kivi_2023, hooper2024kvquant, yang2024mikv, liu_unlocking_2024, li2025kvtuner}. This typically involves replacing the $32$- or $16$-bit FP numbers with lower precision representations. 
These three compression methods are proposed independently, exploiting different properties of $KV$ cache within LLMs. 

The effectiveness of quantization highly depends on the statistical distribution of the cache values. Large value ranges and outliers can lead to substantial quantization errors. 
In addition, the performance of models degrades significantly below a certain quantization bit width (typically around $4$ to $2$ bits), thus limiting the compression ratio. Similarly, channel compression methods also face challenges in terms of the trade-off between accuracy and compression ratio. While works like \cite{wang_svd-llm_2024, zhang_lorc_2024} have demonstrated $2\times$ compression ratios using SVD-based methods, further compression beyond this point leads to high accuracy loss. Recognizing these limitations, we emphasize the importance of combining these different strategies to further improve the compression ratio. For examples, ThinK \cite{xu_think_2024} highlights the compatibility of its channel truncation method with sparsity techniques; ShadowKV \citep{sun_shadowkv_2024} combines sparsity with SVD low-rank approximation to achieve minor performance degradation while achieving very high compression ratios. 

In this work, we follow the channel-axis compression and quantization strategy. We find that direct truncation of the original channels, as exemplified by ThinK \citep{xu_think_2024}, leads to significant performance degradation when pursuing high compression ratios. To address this challenge, we propose a compression method, SVDq, that integrates the channel truncation and quantization, by utilizing our observed underlying relationship between quantization and SVD-based channel compression.

Specifically, we observe an implication of the Eckart–Young–Mirsky theorem \citep{mirsky_symmetric_1960}: the variances of the values within latent channels obtained through SVD are determined by the corresponding singular values and typically exhibit rapid decay. Recognizing that variances are often proportional to value ranges of latent channels, we can utilize singular values to guide the selection of quantization bit widths to balance accuracy and compression ratios. 

Based on this observation, we propose a novel mixed-precision \textbf{key cache}\footnote{We do not investigate the value cache since it often exhibits weak low-rank property.} quantization method that integrates SVD-based channel compression. This method prioritizes higher bit widths for latent channels associated with larger singular values and progressively decreases precision for channels with smaller singular values. The SVD latent channels offer a significant advantage over simple variance-based descending sorting in the original space, because singular values decay exponentially for most key cache. In consequence, the range at each channel decreases fast, and often becomes insignificant after only a small number of latent channels. Hence, this approach enhances the effectiveness of quantization precision allocation for each latent channel. Furthermore, we emphasize the seamless compatibility of this method with sparsity techniques.

Our key contributions are as follows:
\begin{enumerate}
    \item Proposing a novel method that effectively combines quantization and latent channel compression for $K$ cache, providing the theoretical insights.
    \item Demonstrating the compatibility of this method with sparsity techniques.
    \item Achieving a remarkable level of $K$ cache compression with an equivalent mixed quantization precision as low as $1.25$ bit while maintaining comparable model performance.
\end{enumerate}

\section{Related Works}\label{sec:relatedworks}

\textbf{Sparsity}: With different feature extraction based attention estimation algorithms, methods such as Fastgen \citep{ge2023fastgen}, H2O \citep{zhang2024h2o}, Quest \citep{tang_quest_2024}, SparQ \citep{ribar_sparq_2024}, PQCache \citep{zhang2024pqcache}, ShadowKV \citep{sun_shadowkv_2024}, and AttentionPredictor \citep{yang2025attentionpredictor} selectively retain only the most important tokens in the sequence and effectively prune the others. Loki \cite{singhania_loki_2024} is another sparsity method that uses the SVD approximation to accelerate attention estimation for critical tokens selection.
 
\noindent \textbf{Channel Compression}: These methods, such as ThinK \citep{xu_think_2024}, reduce the dimensionality of $KV$ cache by truncating channels or employing low-rank approximations. Prominent examples include SVD-based approaches like SVD-LLM \citep{wang_svd-llm_2024}, LoRC \citep{zhang_lorc_2024}, Palu \citep{chang_palu_2024}, and Eigen Attention \cite{saxena_eigen_2024}. Notably, techniques like Grouped Query Attention (GQA) \cite{ainslie_gqa_2023}, Multi-head Latent Attention (MLA) \citep{deepseek-ai_deepseek-r1_2025}, and transformations from Multi-Head Attention to GQA \citep{jin_align_2024, chen_optimised_2024} can also be viewed as forms of channel compression, as they effectively reduce the number of attention dimensions.

\noindent \textbf{Quantization}: Methods like KIVI \citep{liu_kivi_2023}, KVQuant \cite{hooper2024kvquant}, AlignedKV \citep{tan_alignedkv_2024}, BitStack \cite{wang_bitstack_2024}, and KVTuner \citep{li2025kvtuner} reduce the memory footprint with low precision $KV$ cache. QServe \cite{lin_qserve_2024} introduces several quantization and system co-design methods to achieve efficient W4A8KV4, where SmoothAttention is utilized to migrate the key quantization difficulty to query.

Some works explore the combination of these approaches. In addition to the mentioned ShadowKV \citep{sun_shadowkv_2024} and ThinK \cite{xu_think_2024}, \cite{liu_unlocking_2024} integrates quantization with matrix decomposition to apply different quantization precision for the two decomposed matrices, and Palu \cite{chang_palu_2024} applies per token quantization to the latent vector of the SVD low-rank approximation.

Importantly, the concept of using SVD for mixed-precision quantization has been explored in other contexts. For instance, Delta-CoMe \citep{ping_delta-come_2024} applies this principle to compress LLM weights, while SVDQuant \citep{li_svdquant_2024} utilizes it for compressing diffusion models. The novelty of this work over the mentioned works lies not only in the application of this principle to $K$ cache compression but also in the \textbf{theoretical foundation} upon which we derive the principle and method, and the \textbf{error analysis} we provide.

\section{SVD and Quantization}\label{sec:SVDquant}

\textbf{Singular Value Decomposition}: Let $\textbf{K} \in \mathbb{R}^{s \times d}$ denotes the $K$ cache matrix for a given head in a transformer layer, where $s$ and $d$ represent the sequence length and hidden embedding (channel) dimension, respectively, with $s \gg d$ typically holding for long context applications. Let $\textbf{K}$ be centered by subtracting its per-channel mean $\bar{\textbf{K}} \in \mathbb{R}^d$, \emph{i.e.}, $\textbf{K} \gets \textbf{K} - \bar{\textbf{K}}$ and maintain the same notation. 

Assuming $\textbf{K}$ is full-rank. Its SVD is given by 
\begin{equation}\label{eq:SVD}
    \textbf{K} = \textbf{U} \cdot \textbf{D} \cdot \textbf{V}^\textrm{H},
\end{equation}
where $\textbf{U} \in \mathbb{R}^{s \times d}$ has orthonormal columns, $\textbf{V} \in \mathbb{R}^{d \times d}$ is orthonormal, satisfying $\textbf{U}^\textrm{H} \cdot \textbf{U} = \textbf{I}_d$ and $\textbf{V}^\textrm{H} \cdot \textbf{V} = \textbf{I}_d$, and $\textbf{D} \in \mathbb{R}^{d \times d}$ is a diagonal matrix containing the singular values in its diagonal with elements arranged in descending order, given by $\textbf{D} = \mathrm{Diag}([\lambda_1, ..., \lambda_{d}])$. 

\noindent \textbf{Quantization} Let $\textbf{\textit{k}}_{\rm min} := (\min{\textbf{K}}_{:1}, ..., \min{\textbf{K}}_{:d})$, \emph{i.e.}, the column-wise minimum vector, and analogously define $\textbf{\textit{k}}_{\rm max}$. The per-channel asymmetrical $b$-bit quantization and dequantization operations are given by: 
\begin{align}
    \mathcal{Q}_b (\textbf{K}) := & \ \left\lfloor\frac{\textbf{K} - \textbf{\textit{k}}_{\rm min}}{(\textbf{\textit{k}}_{\rm max} - \textbf{\textit{k}}_{\rm min}) / (2^b - 1)}\right\rceil, \label{eq:quant} \\
    \mathcal{D}_b (\textbf{K}_b) := & \ \mathcal{Q}_b (\textbf{K}) \times \frac{\textbf{\textit{k}}_{\rm max} - \textbf{\textit{k}}_{\rm min}}{2^b - 1} + \textbf{\textit{k}}_{\rm min}, \label{eq:dequant}
\end{align}
where $\left\lfloor \cdot \right\rceil$ denote the rounding operator. Naturally, $\mathcal{D}_b \circ \mathcal{Q}_b (\textbf{K}) \approx \textbf{K}$. 

For uniformly or normally distributed columns of $\textbf{K}$, the relative quantization errors depend solely on the bit width $b$, independent of the range $\textbf{\textit{k}}_{\rm max} - \textbf{\textit{k}}_{\rm min}$. However, the absolute errors scale with $\textbf{\textit{k}}_{\rm max} - \textbf{\textit{k}}_{\rm min}$, implying that smaller value ranges or variances yield smaller absolute quantization errors.  

\section{Methods}\label{sec:methods}

Although the theory of the proposed SVD-quantization method, discussed in the previous section, is expected to be applicable to a much wider range of applications, this work focuses on KV cache compression in the long context inference scenario. For long context LLMs, $KV$ cache generated in the prefilling stage generally dominates the memory usage. Our method is proposed to address this challenge. 

\subsection{SVD Quantization}\label{subsec:SVDq}

Consider the rows of $\textbf{V}^\mathrm{H}$ in Equation \eqref{eq:SVD} as a basis for the row space of $\textbf{K}$. For the projection $\mathcal{P}_{\textbf{V}_{: j}}$ of the rows of $\textbf{K}$ into the $j$-th basis vector, defined by $\mathcal{P}_{\textbf{V}_{: j}} (\textbf{K}) := \textbf{K} \cdot \textbf{V}_{: j}$, following the Eckart–Young–Mirsky theorem \citep{mirsky_symmetric_1960}, we have: 

\begin{theorem}\label{them:1}
    For the $K$ cache matrix $\boldsymbol{\mathrm{K}}$, the variance of its projection satisfies
    \begin{equation}
        \mathrm{Var} (\mathcal{P}_{\boldsymbol{\mathrm{V}}_{: j}} (\boldsymbol{\mathrm{K}})) = \lambda_{j}^2.
    \end{equation}
\end{theorem}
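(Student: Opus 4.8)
The plan is to reduce the projection $\mathcal{P}_{\textbf{V}_{:j}}(\textbf{K})$ to a single (scaled) left singular vector and then read off its empirical variance directly from the orthonormality relations, with the centering hypothesis doing the work of eliminating the mean.

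First I would use the SVD \eqref{eq:SVD} together with $\textbf{V}^{\mathrm{H}}\textbf{V} = \textbf{I}_d$, which gives $\textbf{V}^{\mathrm{H}}\textbf{V}_{:j} = \mathbf{e}_j$, the $j$-th standard basis vector. Substituting,
\[
  \mathcal{P}_{\textbf{V}_{:j}}(\textbf{K}) = \textbf{K}\,\textbf{V}_{:j} = \textbf{U}\,\textbf{D}\,\textbf{V}^{\mathrm{H}}\textbf{V}_{:j} = \textbf{U}\,\textbf{D}\,\mathbf{e}_j = \lambda_j\,\textbf{U}_{:j},
\]
so the $s$ projected coordinates are exactly the entries of $\lambda_j \textbf{U}_{:j} \in \mathbb{R}^{s}$. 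Thus the claim is really a restatement of the defining property of the SVD rather than a genuine appeal to Eckart–Young–Mirsky.

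Second I would check that this vector is mean-zero. Since $\textbf{K}$ was centered per channel, $\mathbf{1}^{\mathrm{H}}\textbf{K} = \mathbf{0}$ with $\mathbf{1}\in\mathbb{R}^s$ the all-ones vector; right-multiplying by $\textbf{V}_{:j}$ yields $\mathbf{1}^{\mathrm{H}}(\lambda_j\textbf{U}_{:j}) = 0$, i.e. the projection has zero sample mean. (This is the only place the centering step is used, and without it the identity would fail.) Consequently the empirical variance reduces to the normalized squared norm, and from $\textbf{U}^{\mathrm{H}}\textbf{U} = \textbf{I}_d$ we get $\|\textbf{U}_{:j}\|_2 = 1$, hence $\mathrm{Var}(\mathcal{P}_{\textbf{V}_{:j}}(\textbf{K})) = \|\lambda_j\textbf{U}_{:j}\|_2^2 = \lambda_j^2$ — under the convention that $\mathrm{Var}$ of a length-$s$ vector denotes $\sum_i (x_i-\bar x)^2$; with the $\tfrac{1}{s}$-normalized convention the right-hand side is $\lambda_j^2/s$, so the statement holds up to this fixed scaling, which is immaterial for the bit-allocation argument that follows.

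There is no substantial obstacle here; the only points requiring care are (a) making explicit that the centering $\textbf{K}\gets\textbf{K}-\bar{\textbf{K}}$ is exactly what makes the projected vector mean-zero, so that "variance" coincides with the scaled squared norm, and (b) pinning down the normalization convention so the equality is literal rather than merely proportional. I would also add a one-line remark that the same computation shows $\textbf{U}_{:i}^{\mathrm{H}}\textbf{U}_{:j}=0$ for $i\neq j$, so distinct latent channels are mutually uncorrelated, which is what justifies allocating quantization bit widths to them independently.
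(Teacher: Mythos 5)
Your argument is essentially the paper's own proof: both reduce $\mathcal{P}_{\textbf{V}_{:j}}(\textbf{K})$ to $\lambda_j \textbf{U}_{:j}$ via $\textbf{V}^{\mathrm{H}}\textbf{V}_{:j}=\mathbf{e}_j$, use the per-channel centering to make the projection mean-zero, and then invoke $\|\textbf{U}_{:j}\|_2=1$ to conclude $\mathrm{Var}=\lambda_j^2$. Your explicit remarks on the $\tfrac{1}{s}$ normalization convention and the uncorrelatedness of distinct latent channels are minor clarifications the paper leaves implicit, not a different route.
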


\begin{corollary}\label{them:2}
    Let $\textbf{k} \in \mathbb{R}^d$ be a $K$ cache vector with $\bar{\boldsymbol{\mathrm{K}}}$ subtracted, \emph{i.e.}, $\textbf{k} \gets \textbf{k} - \bar{\boldsymbol{\mathrm{K}}}$. For any indices $0 < i \leq j <d$, the squared expectations of its projections satisfy:  
    \begin{equation}
        \mathbb{E}((\mathcal{P}_{\boldsymbol{\mathrm{V}}_{: i}} (\textbf{k}))^2) \geq \mathbb{E}((\mathcal{P}_{\boldsymbol{\mathrm{V}}_{: j}} (\textbf{k}))^2).   
    \end{equation}
\end{corollary}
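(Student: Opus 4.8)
The plan is to read the corollary as an immediate consequence of Theorem~\ref{them:1} together with the centering convention and the fact that the singular values $\lambda_1 \ge \cdots \ge \lambda_d$ are arranged in descending order. First I would fix the interpretation of the expectation: treating $\textbf{k}$ as a uniformly random row (token) of $\boldsymbol{\mathrm{K}}$, the quantity $\mathbb{E}((\mathcal{P}_{\boldsymbol{\mathrm{V}}_{: j}}(\textbf{k}))^2)$ is, under this empirical token distribution, exactly the second moment of the $j$-th projected coordinate of $\boldsymbol{\mathrm{K}}$, i.e. it equals $\mathbb{E}((\mathcal{P}_{\boldsymbol{\mathrm{V}}_{: j}}(\boldsymbol{\mathrm{K}}))^2)$ computed with the same normalization used in Theorem~\ref{them:1}.

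Next I would invoke the centering step $\boldsymbol{\mathrm{K}} \gets \boldsymbol{\mathrm{K}} - \bar{\boldsymbol{\mathrm{K}}}$: since every column of the centered $\boldsymbol{\mathrm{K}}$ sums to zero, $\mathbf{1}^{\mathrm H}\boldsymbol{\mathrm{K}} = \mathbf{0}$, hence $\mathbf{1}^{\mathrm H}\boldsymbol{\mathrm{K}}\cdot\boldsymbol{\mathrm{V}}_{:j}=0$, so the $j$-th projected coordinate also has zero empirical mean. Therefore its second moment coincides with its variance, and by Theorem~\ref{them:1} this common value equals $\lambda_j^2$. Applying the identical argument to the index $i$ yields $\mathbb{E}((\mathcal{P}_{\boldsymbol{\mathrm{V}}_{: i}}(\textbf{k}))^2)=\lambda_i^2$.

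Finally, because the diagonal of $\textbf{D}$ is non-increasing, $0<i\le j<d$ forces $\lambda_i \ge \lambda_j \ge 0$, whence $\lambda_i^2 \ge \lambda_j^2$, which is precisely the claimed inequality. The only point that needs genuine care—and the step I expect to be the main (though minor) obstacle—is making the probabilistic reformulation precise: one must state clearly that the expectation is taken over the empirical distribution of the $s$ tokens, so that Theorem~\ref{them:1}'s statement about the variance of the projected matrix transfers verbatim to $\mathbb{E}(\cdot^2)$ for a single random vector, which in turn relies on the zero-mean property guaranteed by the centering. Everything beyond that is the one-line monotonicity of $t \mapsto t^2$ on the nonnegative reals.
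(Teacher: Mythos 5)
Your proposal is correct and follows essentially the same route as the paper, which derives the corollary directly from Theorem~\ref{them:1} by taking $\textbf{k}$ to follow the (empirical) distribution of the rows of the centered $\boldsymbol{\mathrm{K}}$, so that $\mathbb{E}((\mathcal{P}_{\boldsymbol{\mathrm{V}}_{:j}}(\textbf{k}))^2)=\lambda_j^2$ and the descending order of the singular values gives the inequality. Your write-up merely makes explicit the zero-mean and second-moment-equals-variance steps that the paper leaves implicit.
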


\begin{proof}
    For any $0 < j \leq d$, the projection of $\textbf{K}$ is given by
    \begin{equation*}
        \mathcal{P}_{\textbf{V}_{: j}}(\textbf{K}) = \textbf{K} \cdot \textbf{V}_{: j} = \textbf{U} \cdot \textbf{D} \cdot \textbf{V}^\textrm{H} \cdot \textbf{V}_{: j} = \lambda_j \textbf{U}_{:j}.
    \end{equation*}
    
    Since $\mathbb{E}(\mathcal{P}_{\textbf{V}_{: j}}(\textbf{K})) = \mathcal{P}_{\textbf{V}_{: j}}(\mathbb{E}(\textbf{K})) = 0$, we have
    \begin{equation*}
        \mathrm{Var} (\mathcal{P}_{\textbf{V}_{: j}} (\boldsymbol{\mathrm{K}})) = \mathrm{Var} (\lambda_j \textbf{U}_{:j}) = \lambda_j^2 \mathbb{E}\left(\textbf{U}^\textrm{H}_{j:} \cdot \textbf{U}_{:j}\right) = \lambda_j^2. 
    \end{equation*}
    
    This proves Theorem \ref{them:1}. Corollary \ref{them:2} follows directly from Theorem \ref{them:1} when the given vector $\textbf{\textit{k}}$ follows the distribution of the rows of $\textbf{K}$. 
\end{proof}

\begin{figure}[ht]

\begin{subfigure}{0.48\linewidth}
\includegraphics[width=0.47\linewidth,trim=80 30 80 60,clip]{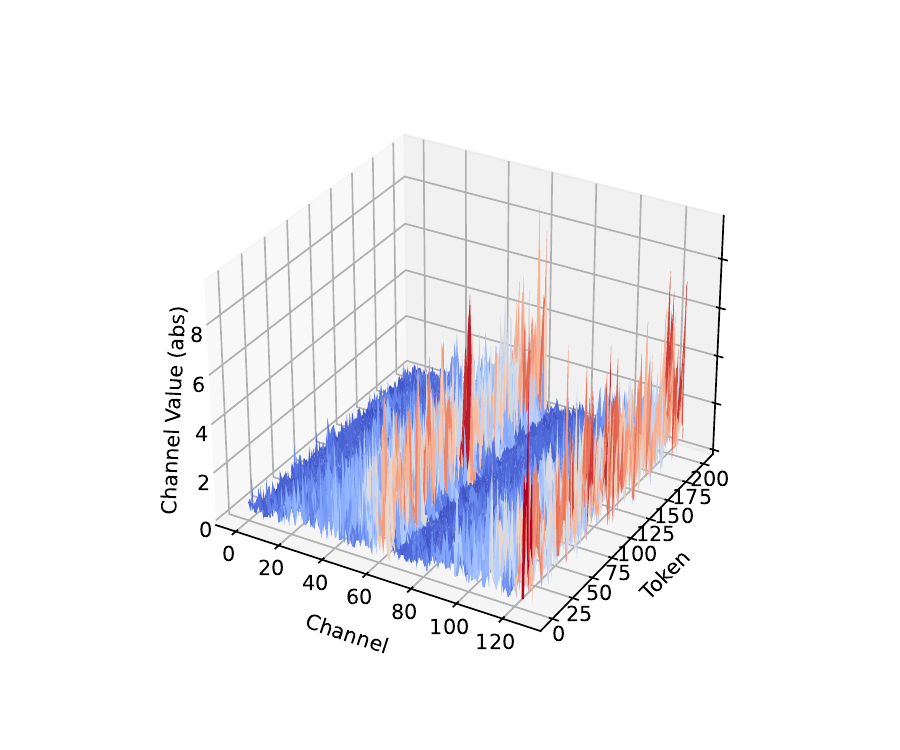}
\includegraphics[width=0.52\linewidth,trim=50 10 30 40,clip]{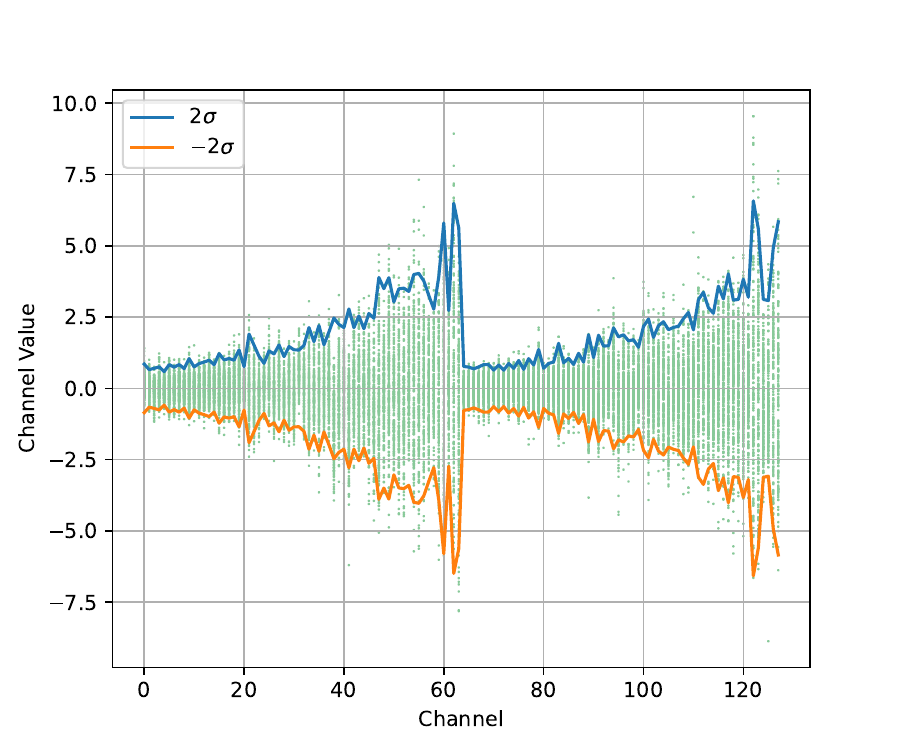}
\caption{Original $\textbf{K}$}
\label{fig:origvar}
\end{subfigure}
\hspace{0.02\linewidth}
\begin{subfigure}{0.48\linewidth}
\includegraphics[width=0.47\linewidth,trim=80 30 80 60,clip]{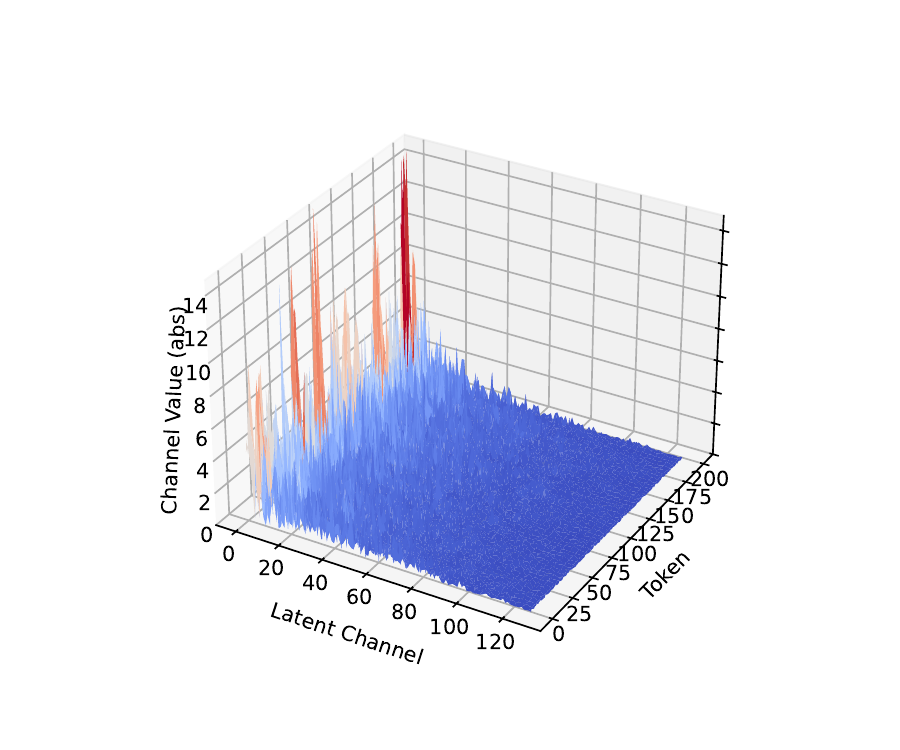}
\includegraphics[width=0.52\linewidth,trim=50 10 30 40,clip]{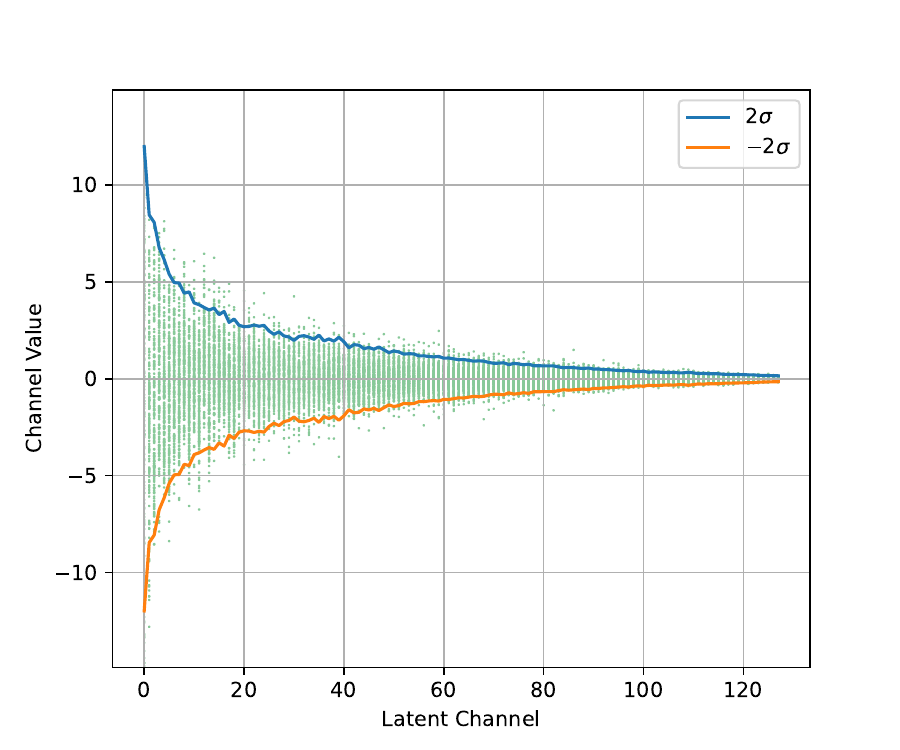}
\caption{Projected $\textbf{K}$, \emph{i.e.}, $\mathcal{P}_{\textbf{V}}(\textbf{K})$}
\label{fig:projvar}
\end{subfigure}

\caption{Distribution of $\textbf{K}$ and its standard deviation}
\label{fig:image2}
\end{figure}

Note that the $\mathcal{P}_{\textbf{V}}(\textbf{K})$ is essentially an alternative representation of $\textbf{K}$ using the singular vector in $\textbf{V}$ as the space bases. We call the columns of $\mathcal{P}_{\textbf{V}}(\textbf{K})$ \textit{latent channels}. Figure \ref{fig:origvar} illustrates the distribution of $\textbf{K}$ in its original space, while Figure \ref{fig:projvar} displays its representation in the SVD space after projection, demonstrating the results of Theorem \ref{them:1} and Corollary \ref{them:2}. 
The singular vector-based projection offers a significant advantage over simple variance-based descending sorting: for most matrices, singular values typically exhibit exponential decay. Consequently, the range of projection values (represented on the $y$-axis in Figure \ref{fig:projvar}) decreases rapidly, becoming relatively insignificant (compared to the value range of the first dimension) after only a small number of latent channels.  

Since $\textbf{K} = \mathcal{P}_{\textbf{V}} (\textbf{K}) \cdot \textbf{V}^\textrm{H}$ where $\mathcal{P}_{\textbf{V}}(\textbf{K}) := \textbf{K} \cdot \textbf{V}$, and all basis vectors in $\textbf{V}$ are unit-normalized, the absolute error in approximating to $\mathcal{P}_{\textbf{V}}$ represents both absolute and relative errors in approximating $\textbf{K}$. Theorem \ref{them:1}, Corollary \ref{them:2}, and Figure \ref{fig:projvar} demonstrate that both value range and variance decay rapidly along the latent channels. This property motivates our efficient mixed-precision quantization method, SVDq, to approximate $\textbf{K}$ via $\mathcal{P}_{\textbf{V}}(\textbf{K})$:

\noindent \emph{\textbf{(1)}} Use high precision quantization for initial latent channels;

\noindent \emph{\textbf{(2)}} Progressively decrease the precision for subsequent latent channels;

\noindent \emph{\textbf{(3)}} Truncate the remaining latent channels with negligible value ranges or singular values.

\begin{figure*}
    \centering
    \includegraphics[width=\linewidth]{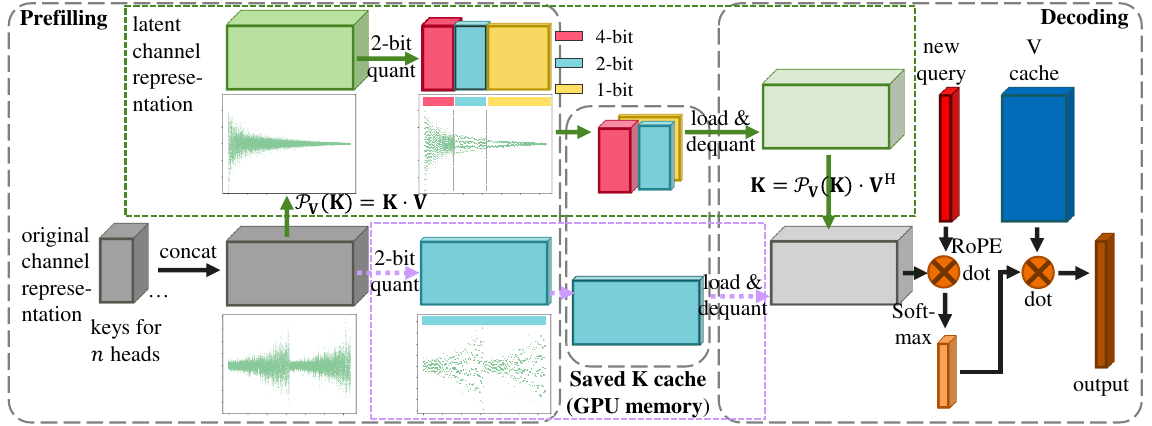}
    \caption{Diagram of SVDq method (path inside the box in green) versus direct per-channel quantization (dash path inside the box in violet).}
    \label{fig:diagram}
\end{figure*}

\subsection{Algorithm}\label{subsec:algorithm}
In our SVDq method, we first apply SVD to the prefilling $K$ cache, obtaining the projection operator $\mathcal{P}_{\textbf{V}}(\cdot)$ using the right SVD matrix $\textbf{V}$. Next, we determine a precision schedule for the quantization on each latent channel based on the singular values $[\lambda_1, ..., \lambda_{d}]$. Specifically, a latent channel associated with a large singular value $\lambda$ is assigned a high quantization bit width $b$, and channels with small $\lambda$ are assigned low $b$ or even be truncated with notation $b = 0$. This yields a schedule vector $\textit{\textbf{b}}$, and the equivalent mixed bit width of this quantization schedule for the $\textbf{K}$ cache is given by
\begin{equation}
    \bar{b} = \frac{1}{d} \sum_{i=1}^d \textit{\textbf{b}}_i. 
\end{equation}
Sequently, we use $\mathcal{Q}_\textit{\textbf{b}}$ in \eqref{eq:quant} to quantize $\mathcal{P}_{\textbf{V}}(\textbf{K})$. The low-bit quantized $\mathcal{P}_{\textbf{V}}(\textbf{K})$ is then saved as the cache. In the decoding process, we dequantize the cache, reconstruct $\textbf{K}$ in its original representation using $\textbf{K} = \mathcal{P}_{\textbf{V}}(\textbf{K}) \cdot \textbf{V}^\mathrm{H}$, and then proceed with the attention computation. We summarize the algorithm using pseudo-code in Algorithm \ref{alg:SVDq} and an abstracted diagram in Figure \ref{fig:diagram}.

\begin{algorithm}[ht]
    \caption{SVD-quantization algorithm for $\textbf{K}$}\label{alg:SVDq}
    \begin{algorithmic}
        \Require $K$ cache matrix $\textbf{K}$ of $L$ layers
        \Ensure $\hat{\textbf{K}} \approx \textbf{K}$
        \For {$l \gets 1$ to $N$}
            \State Load $\textbf{K}$ cache matrix for $l$-th layer
            \State $\textbf{K} = \textbf{K} - \bar{\textbf{K}}$
            \State $\textbf{U}, \textbf{D}, \textbf{V} \gets \mathrm{SVD} (\textbf{K})$ 
            \State $\mathcal{P}_{\textbf{V}}(\textbf{K}) = \textbf{U} \cdot \textbf{D}$
            \State Set quant schedule $\textit{\textbf{b}}$
            \State Save $\bar{\textbf{K}}$, $\textbf{V}$, $\mathcal{Q}_\textit{\textbf{b}} \circ \mathcal{P}_{\textbf{V}}(\textbf{K})$, function $\mathcal{D}_\textit{\textbf{b}}$
        \EndFor
        \State $\hat{\textbf{K}} = \mathcal{D}_\textit{\textbf{b}} \left( \mathcal{Q}_\textit{\textbf{b}} \circ \mathcal{P}_{\textbf{V}}(\textbf{K}) \right) \cdot \textbf{V}^\mathrm{H} + \bar{\textbf{K}}$
    \end{algorithmic}
\end{algorithm}

In this algorithm, the quantities to be saved include the quantized $\mathcal{P}_{\textbf{V}}(\textbf{K}) \in \mathbb{R}^{s \times d}$ (represented using $\bar{b}$-bit), the right SVD matrix $\textbf{V} \in \mathbb{R}^{d \times d}$, the average of $\textbf{K}$ denoted by $\bar{\textbf{K}} \in \mathbb{R}^d$, and the dequant function $\mathcal{D}_\textit{\textbf{b}}$, which relies on the bit schedule $\textit{\textbf{b}} \in \mathbb{R}^d$ and the range of $\mathcal{P}_{\textbf{V}}(\textbf{K})$, given by $\textit{\textbf{p}}_{\rm min}, \textit{\textbf{p}}_{\rm max} \in \mathbb{R}^d$. In long context applications, $d \ll s$, the requirement of memory space for terms that depend solely on $d$, \emph{e.g.}, the space for $\textbf{V}$ and $\bar{\textbf{K}}$, is negligible.
Hence, the compression rate compared with the original $16$-bit $\textbf{K}  \in \mathbb{R}^{s \times d}$ is approximately $16 / \bar{b}$. 

In this work, we concatenate the $\textbf{K}$ matrices of all heads within the same layer, resulting in a larger $\textbf{K}$ matrix with the embedding dimension $d$ being the sum of the embedding dimensions of all heads.
To improve efficiency, for the bit schedule setting $\textit{\textbf{b}}$, we divide the $d$ latent channels of $\mathcal{P}_{\textbf{V}}(\textbf{K})$ into $8$ equal-sized group, each comprising $\frac{d}{8}$ dimensions. The channels within each group share the same quantization bit width. Thus, $\textit{\textbf{b}}$ is determined by an 8-dimensional vector $(b_1, b_2, ..., b_8)$ of integer. For example, a schedule of $(8,4,2,1,1,0,0,0)$ has an equivalent mixed bit width $\bar{b} = 2$ and hence a compression ratio $8$. For a model with $d = 1024$, this schedule implies: 
\begin{itemize}
    \item $8$-bit quantization for the first $128$ latent channels,
    \item $4$-bit for the next $128$ channels,
    \item $2$-bit for the next $128$ channels,
    \item $1$-bit for the next $256$ channels,
    \item truncation for the remaining $384$ channels. 
\end{itemize}
\subsection{Theoretical Error Analysis}\label{subsec:erranalysis}

We begin by presenting a lemma for later analysis.

\begin{lemma}\label{lem:4.1}
    If data $X$ are distributed uniformly within their value range $r$, then the expectation of the square absolute error, $\varepsilon$, of an asymmetrical $b$-bit quantization applied to $X$ is equal to the variance of a uniform distribution with a range of $\frac{r}{2^b}$, that is
    \begin{equation*}
        \mathbb{E} (\varepsilon^2) = \frac{1}{12} \frac{r^2}{2^{2b}}.
    \end{equation*}
\end{lemma}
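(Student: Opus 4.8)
The plan is to reduce the statement to the classical "uniform quantization noise" computation. First I would set up the quantizer concretely on a single variable: asymmetrical $b$-bit quantization with rounding, as in \eqref{eq:quant}--\eqref{eq:dequant}, partitions the value interval of length $r$ into $2^b$ consecutive cells of equal width $\Delta := r/2^b$ and maps every $x$ in a cell to that cell's central representative level (I adopt the idealization $2^b-1 \approx 2^b$ built into the lemma's own phrasing "a uniform distribution with a range of $r/2^b$"). The quantization error is $\varepsilon(x) = x - \mathcal{D}_b(\mathcal{Q}_b(x))$, which on each cell equals the displacement of $x$ from the center of its containing cell, hence takes values in $[-\Delta/2,\Delta/2]$.

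Next I would apply the law of total expectation, conditioning on which cell $X$ lands in. Since $X$ is uniform on an interval of length $r$, conditional on $X$ lying in any fixed cell it is uniform on that cell of width $\Delta$, so the error is uniform on $[-\Delta/2,\Delta/2]$ with mean $0$. Therefore, for each cell, $\mathbb{E}(\varepsilon^2 \mid X \in \text{cell})$ equals the variance of a uniform distribution on an interval of width $\Delta$, namely $\tfrac{1}{\Delta}\int_{-\Delta/2}^{\Delta/2} t^2 \, dt = \Delta^2/12$. This value is identical for every cell, so averaging over the $2^b$ cells leaves it unchanged, giving $\mathbb{E}(\varepsilon^2) = \Delta^2/12 = \tfrac{1}{12}\, r^2 / 2^{2b}$, which is precisely the variance of a uniform distribution of range $r/2^b$, as claimed. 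The moment integral here is the only computation and is routine.

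The main obstacle is the boundary bookkeeping: with the exact map in \eqref{eq:quant}--\eqref{eq:dequant} the true step size is $r/(2^b-1)$ and the first and last cells are effectively half-width, so the per-cell error is not perfectly uniform and the identity holds only up to $O(1/2^b)$-type corrections. I would handle this by stating explicitly that we work in the standard high-resolution regime, replacing $2^b-1$ by $2^b$ and neglecting the $O(1/2^b)$ contribution of the two half-cells at the endpoints; this is the sense in which the lemma — and the error analysis it feeds into — is intended, so an exact combinatorial sum over cells is neither needed nor pursued.
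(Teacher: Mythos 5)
Your proposal is correct. The paper in fact states Lemma \ref{lem:4.1} without any proof, treating it as the standard uniform-quantization-noise fact, so your argument — conditioning on the quantization cell, noting that conditional on the cell the error is uniform of width $\Delta = r/2^b$, and computing the second moment $\Delta^2/12$ — is exactly the intended (and only reasonable) justification; there is no competing approach in the paper to compare against. One small remark on your "main obstacle": the boundary bookkeeping is even cleaner than you suggest. With the exact map \eqref{eq:quant}--\eqref{eq:dequant} the step is $\Delta' = r/(2^b-1)$, the interior cells give a conditional error uniform on $[-\Delta'/2,\Delta'/2]$ with second moment $\Delta'^2/12$, and the two half-width endpoint cells give an error uniform on $[0,\Delta'/2]$ (resp.\ $[-\Delta'/2,0]$), whose second moment is $(\Delta'/2)^2/3 = \Delta'^2/12$ as well — so the law of total expectation yields \emph{exactly} $\mathbb{E}(\varepsilon^2) = \tfrac{1}{12}\,r^2/(2^b-1)^2$ with no half-cell correction; the only approximation in the lemma's displayed formula is replacing $2^b-1$ by $2^b$, which is precisely the idealization built into its phrasing.
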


Let $\textbf{K}$ be centered by subtracting the key's per-channel mean $\bar{\textbf{K}} \in \mathbb{R}^d$, and let $\mathcal{P}_\textbf{V} (\textbf{K})$ be its latent channel representation. The Frobenius norm is invariant under this transformation, as 
\begin{equation*}
    \|\mathcal{P}_\textbf{V} (\textbf{K})\|_\mathrm{F}^2 = \sum_{i=1}^{s} \mathcal{P}_\textbf{V} (\textbf{K}_{i:}) \cdot \mathcal{P}_\textbf{V} (\textbf{K}_{i:})^\mathrm{H} = \sum_{i=1}^{s} \textbf{K}_{i:} \cdot \textbf{K}_{i:}^\mathrm{H} = \|\textbf{K}\|_\mathrm{F}^2.
\end{equation*}

Let $[\sigma_1^2, ..., \sigma_d^2]$ and $[\lambda_1^2, ..., \lambda_d^2]$ denote the variance vectors of the channels for the original and latent channel representations of $\textbf{K}$, respectively. Thus, 
\begin{equation}\label{eq:vareq}
    \sum_{j=1}^{d} \sigma_j^2 = \frac{1}{s} \|\textbf{K}\|_\mathrm{F}^2 = \frac{1}{s} \|\mathcal{P}_\textbf{V} (\textbf{K})\|_\mathrm{F}^2 = \sum_{j=1}^{d} \lambda_j^2.
\end{equation}

We further assume that the key cache distributions in each original channel and latent channel follow uniform distributions. Then, according to Lemma \ref{lem:4.1}, the value ranges of the $j$-th original channel and $j$-th latent channel are $r_j = 2\sqrt{3} \sigma_j$ and $\hat{r}_j = 2\sqrt{3} \lambda_j$, respectively.

\noindent \textbf{Error analysis for direct quantization} Figure \ref{fig:origvar} shows that the variances in the original channels often exhibit similar orders of magnitude. We therefore assume that they are approximately identical, with $\sigma_j^2 = \frac{1}{ds} \|\textbf{K}\|_\mathrm{F}^2$ and $r_j^2 = \frac{12}{ds} \|\textbf{K}\|_\mathrm{F}^2$. Applying a per-channel, direct $b$-bit quantization to $\textbf{K}$, and following Lemma \ref{lem:4.1} and the above analysis, results in a quantization error $\varepsilon_b$ with the expected value:
\begin{equation}\label{eq:qerr1}
    \mathbb{E} (\varepsilon_b^2) = \frac{1}{12} \frac{1}{2^{2b}} \frac{12}{ds} \|\textbf{K}\|_\mathrm{F}^2 = \frac{1}{2^{2b}} \frac{\|\textbf{K}\|_\mathrm{F}^2}{ds}. 
\end{equation}

\noindent \textbf{Error analysis for SVDq} The singular values of a matrix often exhibit exponential decay. We model the variance vector for $\textbf{K}$'s latent channel representation as 
\begin{equation}
    \lambda_j = c e^{-\rho j} = \lambda_i e^{-\rho (j-i)}, 
\end{equation}
for any $1 \leq i < j \leq d$, where $c > 0$ and $\rho > 0$ are parameters. 

Using this model and \eqref{eq:vareq}, we immediately obtain
\begin{equation*}
    c^2 = \frac{e^{2\rho} - 1}{1 - e^{-2 \rho d}} \frac{\|\textbf{K}\|_\mathrm{F}^2}{s} \approx \frac{e^{2\rho} - 1}{s} \|\textbf{K}\|_\mathrm{F}^2, 
\end{equation*}
as well as the square of the value range of each latent channel 
\begin{equation}
    \hat{r}_j^2 = 12 \frac{(e^{2\rho} - 1) e^{-2\rho j}}{s} \|\textbf{K}\|_\mathrm{F}^2 = 12 (e^{2\rho} - 1) e^{-2\rho j} 2^{2b} d \, \mathbb{E}(\varepsilon_b^2). 
\end{equation}

For further analysis, we set the bit schedule as a simple decreased arithmetic progression\footnote{This setting is introduced only for the sake of clear theoretical error analysis, as it yields concise error expressions. It is not a realistic schedule because it may contain no integer bit widths. A similar analysis can be applied to other schedules, although the derivations may become more complex.}: $b_i = (8-i) \frac{2b}{7}$, resulting in $\bar{b} = \sum_{i=1}^8 b_i = b$, and compare SVDq with this schedule to a direct $b$-bit quantization. Using Lemma \ref{lem:4.1}, for the $i$-th part with $\frac{d}{8}$ latent channels with quantization bit width of $b_i$, the expectation of the square quantization error, $\hat{\varepsilon}_i$, is
\begin{equation*}
    \mathbb{E}(\hat{\varepsilon}_i^2) = \frac{8}{d} \sum_{j=d(i-1)/8+1}^{di/8} \frac{1}{12} \frac{\hat{r}_j^2}{2^{2b_i}} = 8 \frac{e^{2\rho} - 1}{2^{2 (b_i - b)}} \mathbb{E}(\varepsilon_b^2) \sum_{j=d(i-1)/8+1}^{di/8} e^{-2\rho j} \approx 8 \frac{e^{-d\rho (i-1)/4}}{e^{(b_i - b) \ln{4}}} \mathbb{E}(\varepsilon_b^2).
\end{equation*}

Denoting $\hat{b}_i := b_1 - b_i = (i-1)\frac{2b}{7}$ and $\alpha := \frac{d\rho}{4} - \frac{2b}{7} \ln{4}$, the error for SVDq, $\hat{\varepsilon}_{b}$, satisfies 
\begin{equation*}
\begin{split}
    \mathbb{E}(\hat{\varepsilon}_{b}^2) &= \frac{1}{8} \sum_{i=1}^8 \mathbb{E}(\hat{\varepsilon}_i^2) = \mathbb{E}(\varepsilon_b^2) \sum_{i=1}^8 \frac{e^{-d\rho (i-1)/4}}{e^{(b_i-b) \ln{4}}} = \frac{\mathbb{E}(\varepsilon_b^2)}{4^{b_1-b}} \sum_{i=1}^8 e^{-d\rho (i-1)/4 +\hat{b}_i \ln{4}} \\
    &= \frac{\mathbb{E}(\varepsilon_b^2)}{4^{b_1-b}} \sum_{i=1}^8 e^{-\alpha (i-1)} = \frac{1}{4^{b_1-b}} \frac{1 - e^{-8\alpha}}{1-e^{-\alpha}} \mathbb{E}(\varepsilon_b^2). 
\end{split}
\end{equation*}

For LLMs like Llama-3.1-8B, $d=1024$, the decay rate $\rho$ is often on the order of approximately $0.1$, while we typically consider quantization bit widths at the levels $b=2$ or $4$. Consequently, we often have $\rho \gg \frac{8b}{7d} \ln{4}$, resulting in $\alpha \gg 0$. Under these conditions, typically $\left(\frac{\mathbb{E}(\hat{\varepsilon}_{b}^2)}{\mathbb{E}(\varepsilon_b^2)}\right)^{\frac{1}{2}} \approx 2^{b-b_1} < 0.1$, the expectation quantization error of SVDq is much smaller than the direct per-channel quantization error. This result theoretically proves the efficiency of mixed-precision quantization in the latent channel representation guided by SVD.

\section{Experiments}\label{sec:experiments}

\begin{table}
\centering
\resizebox{0.6\linewidth}{!}{
\begin{tabular}{crrrr}
\hline
\textbf{Model} & $d_h$ & $n$ & $d$ & \textbf{part dim} $\frac{d}{8}$ \\
\hline
Llama-3.1-8B-Instruct & $128$ & $8$ & $1024$ & $128$ \\
Qwen2.5-14B-Instruct & $128$ & $8$ & $1024$ & $128$ \\
Qwen2.5-7B-Instruct & $128$ & $4$ & $512$ & $64$ \\
Qwen2.5-3B-Instruct & $128$ & $2$ & $256$ & $32$ \\
\hline
\end{tabular}
}
\caption{\label{tab:config}
Configuration of $K$ cache for four models. 
}
\end{table}

In this section, we apply our method in different model settings to showcase its efficiency in $K$ cache compression. 

We focus on long context applications using four large language models: Llama-3.1-8B-Instruct \citep{grattafiori_llama_2024}, Qwen2.5-14B-Instruct, Qwen2.5-7B-Instruct, and Qwen2.5-3B-Instruct \citep{qwen_qwen25_2025}. The numerical experiments are based on the RULER benchmarks \citep{hsieh_ruler_2024} and LongBench benchmarks \cite{bai2023longbench}. We omit the scores for RULER NIAH Single tests because in our tests, almost all methods achieved perfect scores ($100$) on these tests, indicating that they do not pose a sufficient challenge. We present the results of RULER in Sections \ref{subsec:ResSVDq}-\ref{subsec:ResSVDqspQV} and refer the readers to Section \ref{subsec:res_longbench} for the results of LongBench. 

The configuration settings for the $K$ cache of the four models are listed in Table \ref{tab:config}. The long context prompt length is set to $64$K, satisfying $s = 64 \times 1024 \gg d$.

\subsection{Results of SVDq}\label{subsec:ResSVDq}

In our first experiment, we implement the SVD quantization method directly in $K$ cache compression and summarize the results in Table \ref{tab:SVDresult}. Detailed experiment settings and descriptions are provided in the Appendix \ref{app:5.1}. 

The results demonstrate that the proposed SVDq method generally results in lower performance degradation compared to direct quantization and channel compression across almost all tests. On average, the SVDq method achieves higher scores despite having a lower equivalent mixed quantization bit width. This clearly showcases the significant advantage of truncating and quantizing the SVD latent channels over operating directly on the original channels.

Please note that in our tests, both direct $2$-bit quantization of the original $\textbf{K}$ and equivalent $2$-bit ThinK that retains $\frac{1}{2}$ original channels and combines $4$-bit quantization result in much more significant performance degradation. Therefore, we opted to compare our SVDq method in $2$- and $3$-bit setting with direct $3$-bit quantization and equivalent $3$-bit ThinK for a more meaningful evaluation.

\begin{table*}
\centering
\resizebox{\textwidth}{!}{
\begin{tabular}{crr|rrrrrrrr|r}
\toprule
Method & bit & CR & N-MK1 & N-MK2 & N-MQ & N-MV & VT & FWE & QA-1 & QA-2 & Average \\ \hline
\multicolumn{11}{c}{\textbf{Llama-3.1-8B-Instruct}} \\ 
\midrule
\rowcolor{lightlightgray} Default & $16$ & $1.0$ & $99.0$ & $97.9$ & $98.7$ & $98.2$ & $97.5$ & $85.4$ & $82.3$ & $60.4$ & $90.0$ \\
Per-channel Quant & $3$ & $5.3$ & $97.9$ & $70.8$ & $94.0$ & $91.1$ & $86.0$ & $84.7$ & $67.7$ & $46.9$ & $79.9$ \\
ThinK & $3$ & $5.3$ & $94.8$ & $66.7$ & $87.5$ & $80.7$ & $66.2$ & $90.3$ & $75.0$ & $55.2$ & $77.2$ \\
\rowcolor{lightmintbg} SVDq (ours) & $3$ & $5.3$ & $100.0$ & $96.9$ & $99.2$ & $95.3$ & $97.3$ & $86.1$ & $85.4$ & $57.3$ & $\textbf{89.7}$ \\
\rowcolor{lightmintbg} SVDq (ours) & $2$ & $\textbf{8.0}$ & $99.0$ & $94.8$ & $96.1$ & $92.7$ & $99.0$ & $84.4$ & $75.0$ & $47.9$ & $\underline{86.1}$ \\
\midrule
\multicolumn{11}{c}{\textbf{Qwen2.5-14B-Instruct}} \\ 
\midrule
\rowcolor{lightlightgray} Default & $16$ & $1.0$ & $91.7$ & $41.7$ & $98.2$ & $90.1$ & $96.9$ & $93.8$ & $53.1$ & $49.0$ & $76.8$ \\
Per-channel Quant & $3$ & $5.3$ & $58.3$ & $8.33$ & $76.3$ & $79.7$ & $87.9$ & $94.8$ & $26.0$ & $38.5$ & $58.7$ \\
ThinK & $3$ & $5.3$ & $75.0$ & $25.0$ & $85.7$ & $87.2$ & $92.7$ & $89.9$ & $35.4$ & $30.2$ & $65.1$ \\
\rowcolor{lightmintbg} SVDq (ours) & $3$ & $5.3$ & $85.4$ & $42.7$ & $96.6$ & $85.4$ & $97.5$ & $94.1$ & $55.2$ & $46.9$ & $\textbf{75.6}$ \\
\rowcolor{lightmintbg} SVDq (ours) & $2$ & $\textbf{8.0}$ & $65.6$ & $32.3$ & $90.9$ & $91.1$ & $97.9$ & $94.4$ & $59.4$ & $47.9$ & $\underline{72.5}$ \\
\midrule
\multicolumn{11}{c}{\textbf{Qwen2.5-7B-Instruct}} \\ 
\midrule
\rowcolor{lightlightgray} Default & $16$ & $1.0$ & $86.5$ & $26.0$ & $95.8$ & $87.5$ & $85.8$ & $83.0$ & $61.5$ & $38.5$ & $70.6$ \\
Per-channel Quant & $3$ & $5.3$ & $37.5$ & $3.1$ & $46.9$ & $47.7$ & $63.5$ & $77.1$ & $18.8$ & $25.0$ & $39.9$ \\
ThinK & $3$ & $5.3$ & $60.4$ & $8.3$ & $66.9$ & $71.1$ & $63.7$ & $76.7$ & $40.6$ & $35.4$ & $52.9$ \\
\rowcolor{lightmintbg} SVDq (ours) & $3$ & $5.3$ & $88.5$ & $29.2$ & $92.7$ & $80.2$ & $84.0$ & $87.8$ & $54.2$ & $40.6$ & $\textbf{69.7}$ \\
\rowcolor{lightmintbg} SVDq (ours) & $2$ & $\textbf{8.0}$ & $78.1$ & $36.5$ & $81.8$ & $82.6$ & $79.4$ & $71.5$ & $39.6$ & $32.3$ & $\underline{62.7}$ \\
\midrule
\multicolumn{11}{c}{\textbf{Qwen2.5-3B-Instruct}} \\ 
\midrule
\rowcolor{lightlightgray} Default & $16$ & $1.0$ & $78.1$ & $27.1$ & $89.8$ & $88.8$ & $81.0$ & $72.2$ & $41.7$ & $30.2$ & $63.6$ \\
Per-channel Quant & $3$ & $5.3$ & $27.1$ & $3.1$ & $23.2$ & $25.8$ & $61.7$ & $63.2$ & $14.6$ & $24.0$ & $30.3$ \\
ThinK & $3$ & $5.3$ & $38.5$ & $7.3$ & $49.5$ & $47.9$ & $64.8$ & $66.3$ & $26.0$ & $25.0$ & $40.7$ \\
\rowcolor{lightmintbg} SVDq (ours) & $3$ & $5.3$ & $66.7$ & $15.6$ & $79.7$ & $75.3$ & $74.2$ & $66.7$ & $24.0$ & $27.1$ & $\textbf{53.6}$ \\
\rowcolor{lightmintbg} SVDq (ours) & $2$ & $\textbf{8.0}$ & $52.1$ & $16.7$ & $57.8$ & $56.0$ & $69.8$ & $58.7$ & $19.8$ & $27.1$ & $\underline{44.7}$ \\
\bottomrule
\end{tabular}
}
\caption{\label{tab:SVDresult}
Performance of our method ("SVDq") for key compression in different models on the RULER benchmark evaluated at a context length of 64K. The bit schedules for SVDq are $\textbf{\textit{b}} = (8, 4, 4, 4, 2, 2, 0, 0), (8, 4, 4, 0, 0, 0, 0, 0)$, resulting in $\bar{b} = 3, 2$, respectively. The third column ("CR") is refer to as compression ratio given by $16 / \bar{b}$. The second row ("Per-channel Quant") refers to applying direct per-channel quantization to the original $\textbf{K}$. The thrid row ("ThinK") refers to applying the ThinK method \citep{xu_think_2024} with $\frac{3}{4}$ compression ratio to the original $\textbf{K}$, combining $4$-bit quantization. Our method outperforms direct quantization and ThinK with quantization despite having a lower (mixed) bit width ($2$ bits versus $3$ bits). The value cache is retained in BF16 type without any processes. Detailed settings are found in the Appendix \ref{app:5.1}. 
}
\end{table*}

\subsection{Results of SVDq with Sparsity}\label{subsec:ResSVDqsp}

Although SVDq can improve model performance while using small bit quantizations, significant performance loss can still occur when the bit width is extremely low, such as $\bar{b} = 2$. Hence, we combine our SVDq method with a sparsity technique to investigate its compatibility with other techniques and explore potential performance improvements.

We adopt the sparsity strategy proposed in the ShadowKV method \citep{sun_shadowkv_2024}. Table \ref{tab:SVDspresult} presents the results for sparsity ("ShadowKV SparsityOnly") and ShadowKV ("ShadowKV") as baselines. Please see a brief introduction of the ShadowKV and the description of these baseline settings in the Appendix \ref{app:5.2}. For the SVDq method, we investigate different quantization bit schedules with varying equivalent mixed bit widths: $\bar{b} = 2.25, 1.75$, and $1.25$. Detailed schedules are provided in Table \ref{tab:bitschedule} in the Appendix \ref{app:5.2}. We apply the SVDq in conjunction with the sparsity method from ShadowKV. The scores are presented in the yellow rows in Table \ref{tab:SVDspresult}.

Our observations reveal that, when combined with sparsity, our SVDq compression method does not result in significant performance degradation, even with extremely low quantization bit widths such as $\bar{b} = 1.25$. Decreasing the bit width from $\bar{b} = 2.25$ to $\bar{b} = 1.75$ has a negligible impact on the score. Further decreasing $\bar{b}$ to $1.25$ results in a slight performance loss, although it remains relatively insignificant. Notably, our quantization method, even with $\bar{b} = 1.25$, outperforms the low-rank approximation used in ShadowKV, demonstrating the ineffectiveness of directly truncating SVD ranks. Taking into account the sparsity compression ratio of $32\times$, SVDq contributes an additional ratio of up to $12.8\times$, resulting in a total compression ratio of $400\times$. 

Notably, by comparing Tables \ref{tab:SVDresult} and \ref{tab:SVDspresult}, the introduction of sparsity does not result in performance degradation; it can even improve the performance of models that solely use SVDq or low-rank compression. We observe that with sparsity, the model can withstand higher compression ratios. This may be attributed to the fact that quantization and low-rank approximation introduce errors across all tokens, potentially leading to significant error accumulation in the full attention mechanism. However, sparsity discards unimportant tokens, which can help to mitigate the error from these tokens and improve overall performance.

\begin{table*}
\centering
\resizebox{\textwidth}{!}{
\begin{tabular}{crr|rrrrrrrr|r}
\toprule
Method & bit & CR & N-MK1 & N-MK2 & N-MQ & N-MV & VT & FWE & QA-1 & QA-2 & Average \\ \hline
\multicolumn{11}{c}{\textbf{Llama-3.1-8B-Instruct}} \\ 
\midrule
\rowcolor{lightlightgray} Default & $16$ & $1$ & $99.0$ & $97.9$ & $98.7$ & $98.2$ & $97.5$ & $85.4$ & $82.3$ & $60.4$ & $90.0$ \\
ShadowKV SparsityOnly & $16$ & $32$ & $100.0$ & $97.9$ & $99.0$ & $94.5$ & $89.6$ & $74.0$ & $82.3$ & $61.5$ & $\textbf{87.3}$ \\
ShadowKV & $2.5$ & $205$ & $99.0$ & $97.9$ & $99.0$ & $96.1$ & $85.6$ & $75.0$ & $82.3$ & $59.4$  & $86.8$ \\
\rowcolor{lightyellow} SVDq+Sparsity & $2.25$ & $227$ & $100.0$ & $97.9$ & $98.4$ & $95.3$ & $89.6$ & $74.0$ & $81.2$ & $60.4$ & $\underline{87.1}$ \\
\rowcolor{lightyellow} SVDq+Sparsity & $1.75$ & $291$ & $100.0$ & $97.9$ & $98.7$ & $94.5$ & $88.7$ & $74.7$ & $83.3$ & $60.4$ & $\textbf{87.3}$ \\
\rowcolor{lightyellow} SVDq+Sparsity & $1.25$ & $\textbf{410}$ & $99.0$ & $96.6$ & $99.2$ & $93.2$ & $87.3$ & $74.3$ & $83.3$ & $60.4$ & $86.7$ \\
\rowcolor{lightgreen} SVDq+Sparsity+V4 & $2.25$ & $227$ & $100.0$ & $97.9$ & $98.4$ & $95.3$ & $88.3$ & $75.0$ & $81.2$ & $60.4$ & $\underline{87.1}$ \\
\rowcolor{lightgreen} SVDq+Sparsity+V4 & $1.75$ & $291$ & $100.0$ & $96.9$ & $99.0$ & $94.5$ & $87.7$ & $75.7$ & $832.3$ & $60.4$ & $\underline{87.1}$ \\
\rowcolor{lightgreen} SVDq+Sparsity+V4 & $1.25$ & $\textbf{410}$ & $99.0$ & $96.9$ & $99.2$ & $93.0$ & $86.2$ & $73.3$ & $83.3$ & $60.4$ & $86.4$ \\
\midrule
\multicolumn{11}{c}{\textbf{Qwen2.5-14B-Instruct}} \\ 
\hline
\rowcolor{lightlightgray} Default & $16$ & $1$ & $91.7$ & $41.7$ & $98.2$ & $90.1$ & $96.9$ & $93.8$ & $53.1$ & $49.0$ & $76.8$ \\
ShadowKV SparsityOnly & $16$ & $32$ & $90.6$ & $38.5$ & $96.1$ & $87.0$ & $95.2$ & $86.8$ & $55.2$ & $44.8$ & $\textbf{74.3}$ \\
ShadowKV & $2.5$ & $205$ & $88.5$ & $38.5$ & $94.0$ & $78.6$ & $93.7$ & $88.2$ & $52.1$ & $46.9$ & $72.6$ \\
\rowcolor{lightyellow} SVDq+Sparsity & $2.25$ & $227$ & $88.5$ & $36.5$ & $96.6$ & $86.7$ & $96.7$ & $86.5$ & $56.2$ & $44.8$ & $\underline{74.1}$ \\
\rowcolor{lightyellow} SVDq+Sparsity & $1.75$ & $291$ & $87.5$ & $38.5$ & $94.8$ & $83.1$ & $95.2$ & $87.5$ & $54.2$ & $43.8$ & $73.1$ \\
\rowcolor{lightyellow} SVDq+Sparsity & $1.25$ & $\textbf{410}$ & $89.6$ & $34.4$ & $94.0$ & $85.4$ & $96.5$ & $88.2$ & $54.2$ & $42.7$ & $73.1$ \\
\rowcolor{lightgreen} SVDq+Sparsity+V4 & $2.25$ & $227$ & $88.5$ & $34.4$ & $95.3$ & $85.7$ & $96.0$ & $85.4$ & $57.3$ & $42.7$ & ${73.2}$ \\
\rowcolor{lightgreen} SVDq+Sparsity+V4 & $1.75$ & $291$ & $88.5$ & $36.5$ & $96.1$ & $82.8$ & $95.4$ & $86.8$ & $54.2$ & $42.7$ & $73.1$ \\
\rowcolor{lightgreen} SVDq+Sparsity+V4 & $1.25$ & $\textbf{410}$ & $87.5$ & $35.4$ & $94.8$ & $84.1$ & $96.0$ & $87.5$ & $55.2$ & $43.8$ & $73.0$ \\
\midrule
\multicolumn{11}{c}{\textbf{Qwen2.5-7B-Instruct}} \\ 
\midrule
\rowcolor{lightlightgray} Default & $16$ & $1$ & $86.5$ & $26.0$ & $95.8$ & $87.5$ & $85.8$ & $83.0$ & $61.5$ & $38.5$ & $70.6$ \\
ShadowKV SparsityOnly & $16$ & $32$ & $85.4$ & $19.8$ & $93.5$ & $87.2$ & $86.9$ & $70.8$ & $65.6$ & $35.4$ & $68.1$ \\
ShadowKV & $2.5$ & $205$ & $86.5$ & $17.7$ & $89.8$ & $75.8$ & $71.2$ & $62.8$ & $67.7$ & $37.5$ & $63.6$ \\
\rowcolor{lightyellow} SVDq+Sparsity & $2.25$ & $227$ & $89.6$ & $19.8$ & $94.3$ & $89.6$ & $85.6$ & $69.1$ & $67.7$ & $38.5$ & $\textbf{69.3}$ \\
\rowcolor{lightyellow} SVDq+Sparsity & $1.75$ & $291$ & $87.5$ & $15.6$ & $94.3$ & $88.5$ & $81.9$ & $69.1$ & $65.6$ & $37.5$ & $67.5$ \\
\rowcolor{lightyellow} SVDq+Sparsity & $1.25$ & $\textbf{410}$ & $86.5$ & $15.6$ & $93.5$ & $88.0$ & $83.7$ & $68.1$ & $62.5$ & $36.5$ & $66.8$ \\
\rowcolor{lightgreen} SVDq+Sparsity+V4 & $2.25$ & $227$ & $86.5$ & $20.8$ & $95.1$ & $89.6$ & $84.4$ & $70.1$ & $66.7$ & $39.6$ & $\underline{69.1}$ \\
\rowcolor{lightgreen} SVDq+Sparsity+V4 & $1.75$ & $291$ & $86.5$ & $18.8$ & $93.5$ & $90.4$ & $82.5$ & $68.1$ & $64.6$ & $36.5$ & $67.6$ \\
\rowcolor{lightgreen} SVDq+Sparsity+V4 & $1.25$ & $\textbf{410}$ & $86.5$ & $16.7$ & $92.4$ & $87.0$ & $83.3$ & $68.4$ & $62.5$ & $39.6$ & $67.0$ \\
\midrule
\multicolumn{11}{c}{\textbf{Qwen2.5-3B-Instruct}} \\ 
\midrule
\rowcolor{lightlightgray} Default & $16$ & $1$ & $78.1$ & $27.1$ & $89.8$ & $88.8$ & $81.0$ & $72.2$ & $41.7$ & $30.2$ & $63.6$ \\
ShadowKV SparsityOnly & $16$ & $32$ & $77.1$ & $18.8$ & $83.6$ & $81.8$ & $75.2$ & $48.6$ & $43.8$ & $28.1$ & $\underline{57.1}$ \\
ShadowKV & $2.5$ & $205$ & $75$ & $17.7$ & $69.3$ & $71.4$ & $69.2$ & $50.7$ & $32.3$ & $29.2$ & $51.8$ \\
\rowcolor{lightyellow} SVDq+Sparsity & $2.25$ & $227$ & $78.1$ & $19.8$ & $82.0$ & $83.6$ & $77.3$ & $47.2$ & $36.5$ & $28.1$ & $56.6$ \\
\rowcolor{lightyellow} SVDq+Sparsity & $1.75$ & $291$ & $80.2$ & $20.8$ & $80.7$ & $83.3$ & $76.9$ & $49.7$ & $38.5$ & $27.1$ & $\textbf{57.2}$ \\
\rowcolor{lightyellow} SVDq+Sparsity & $1.25$ & $\textbf{410}$ & $75.0$ & $17.7$ & $78.9$ & $82.6$ & $77.1$ & $46.9$ & $35.4$ & $30.2$ & $55.5$ \\
\rowcolor{lightgreen} SVDq+Sparsity+V4 & $2.25$ & $227$ & $75.0$ & $20.5$ & $80.2$ & $83.9$ & $78.7$ & $45.8$ & $38.5$ & $28.1$ & $56.4$ \\
\rowcolor{lightgreen} SVDq+Sparsity+V4 & $1.75$ & $291$ & $80.2$ & $19.8$ & $81.8$ & $83.3$ & $76.0$ & $49.0$ & $37.5$ & $29.2$ & $\underline{57.1}$ \\
\rowcolor{lightgreen} SVDq+Sparsity+V4 & $1.25$ & $\textbf{410}$ & $77.1$ & $13.5$ & $77.3$ & $81.2$ & $76.2$ & $46.9$ & $33.3$ & $29.2$ & $54.4$ \\
\bottomrule
\end{tabular}
}
\caption{\label{tab:SVDspresult}
Performance of our method in conjunction with the sparsity strategy of ShadowKV, denoted by "SVDq+Sparsity", in different models on the RULER benchmark evaluated at a context length of $64$K. The third column key compression ratio ("CR") is computed by $16 / \bar{b} \times$ the sparsity ratio, $32$, and represents the compression ratio of the key cache that involves in the attention computation. The second row ("ShadowKV SparsityOnly") refers to applying only the sparsity strategy of ShadowKV without any quantization or SVD low-rank methods. It acts as another baseline for comparison. For the third row ("ShadowKV"), in the Llama-3.1 model, we use the same settings as in the ShadowKV paper, retaining 160 ranks of the SVD and truncating the rest, which is equivalent to a quantization bit width of $2.5$. For the Qwen2.5-14B, 7B and 3B models, to maintain consistent quantization bit widths ($2.5$ bits), we retain $160$, $80$ and $40$ ranks, respectively. The quantization bit schedules for "SVDq+Sparsity" (in yellow) are identical for all four models and are shown in Table \ref{tab:bitschedule} in Appendix \ref{app:5.2}. In addition to the yellow rows, the rows "SVDq+Sparsity+V4" (in green) introduce an auxiliary $4$-bit per-token quantization in the $V$ cache. Our method outperforms ShadowKV despite having a lower (mixed) bit width. 
}
\end{table*}

\subsection{Results of SVDq with Sparsity and $V$ Quantization} \label{subsec:ResSVDqspQV}

In the final experiment, we repeat the second experiment while additionally introducing a quantization method to the $V$ cache to further reduce the required memory for model loading. Please find the experiment settings in Appendix \ref{app:5.3}. The results are presented in the green rows in Table \ref{tab:SVDspresult}.

Our observations indicate a very small performance loss compared to the yellow rows (without V cache quantization) in Table \ref{tab:SVDspresult}. This suggests that, despite being an approximation method with a very low compression rate, SVDq does not significantly degrade model performance even when combined with sparsity and $V$ cache compression.

The resulting insignificant performance degeneration while combining sparsity and V cache quantization not only demonstrate the effectiveness of the SVD quantization method in $K$ cache compression but also highlight its compatibility with existing compression techniques.

\subsection{Results of LongBench benchmark}\label{subsec:res_longbench}

We also implement numerical experiments based on the LongBench benchmark \cite{bai2023longbench} and exclude the tests of which the sequence lengths are less than $4$K. The baselines and configurations of our method are the same as those presented in Section \ref{sec:experiments}. The results are shown in Table \ref{tab:longbench}. Note that the second row for each model, which includes the results for "ShadowKV SparsityOnly," "ShadowKV", three "SVDq+Sparsity", and three "SVDq+Sparsity+4V" configurations, corresponds to the results in Table \ref{tab:SVDspresult}. For most of the models and method configurations, our SVDq method either outperforms or exhibits comparable performance to the baselines, including per-channel quantization \cite{liu2024kivi}, ThinK \cite{xu_think_2024}, and ShadowKV \cite{sun_shadowkv_2024}.  Notably, the performance degradation of our method compared to the full, non-quantized model is insignificant and nearly lossless for LongBench datasets. These results further corroborate the conclusions drawn from our analysis of the RULER benchmark.

\begin{table*}
\centering
\resizebox{\textwidth}{!}{
\begin{tabular}{crr|rrrrrr|r}
\toprule
Method & bit & CR & NarrativeQA & HotpotQA & MuSiQue & GovRepprt & SAMSum & RepoBench-P & Average \\ \midrule
\multicolumn{10}{c}{\textbf{Llama-3.1-8B-Instruct}} \\ 
\midrule
\rowcolor{lightlightgray} Default & $16$ & $1$ & $22.3$ & $17.5$ & $14.2$ & $33.4$ & $35.7$ & $43.4$ &  $30.3$ \\
Per-channel Quant & $3$ & $5.3$ & $17.7$ & $15.9$ & $6.15$ & $33.0$ & $35.4$ & $30.9$ & $24.1$ \\ 
ThinK & $3$ & $5.3$ & $14.0$ & $15.4$ & $11.0$ & $33.0$ & $35.2$ & $48.8$ & $\underline{30.0}$ \\ 
\rowcolor{lightmintbg} SVDq(ours) & $3$ & $5.3$ & $20.2$ & $16.3$ & $11.0$ & $34.2$ & $35.3$ & $45.1$ & $\underline{30.0}$ \\ 
\rowcolor{lightmintbg} SVDq(ours) & $2$ & $\textbf{8.0}$ & $18.4$ & $18.0$ & $11.5$ & $32.3$ & $34.7$ & $48.5$ & $\textbf{30.8}$ \\ 
\hline
ShadowKV SparsityOnly & $16$ & $32$ & $21.9$ & $20.8$ & $10.3$ & $33.0$ & $36.2$ & $44.2$ & $\underline{30.4}$  \\
ShadowKV & $2.5$ & $205$ & $22.6$ & $21.5$ & $10.7$ & $32.5$ & $37.1$ & $45.6$ & $\textbf{31.2}$ \\
\rowcolor{lightyellow} SVDq+Sparsity & $2.25$ & $227$ & $22.3$ & $21.4$ & $9.54$ & $33.2$ & $36.2$ & $42.3$ & $29.9$ \\
\rowcolor{lightyellow} SVDq+Sparsity & $1.75$ & $291$ & $22.8$ & $21.3$ & $10.3$ & $33.4$ & $35.2$ & $43.7$ & $\underline{30.4}$ \\
\rowcolor{lightyellow} SVDq+Sparsity & $1.25$ & $\textbf{410}$ & $20.8$ & $17.9$ & $11.1$ & $33.0$ & $34.2$ & $43.1$ & $29.4$ \\
\rowcolor{lightgreen} SVDq+Sparsity+V4 & $2.25$ & $227$ & $22.0$ & $19.6$ & $13.1$ & $33.6$ & $35.4$ & $41.3$ & $29.7$ \\ 
\rowcolor{lightgreen} SVDq+Sparsity+V4 & $1.75$ & $291$ & $22.3$ & $19.5$ & $11.4$ & $33.6$ & $34.9$ & $44.1$ & $30.3$ \\ 
\rowcolor{lightgreen} SVDq+Sparsity+V4 & $1.25$ & $\textbf{410}$ & $20.9$ & $22.1$ & $11.9$ & $33.1$ & $37.0$ & $41.8$ & $30.0$ \\ 
\midrule
\multicolumn{10}{c}{\textbf{Qwen2.5-14B-Instruct}} \\ 
\midrule
\rowcolor{lightlightgray} Default & $16$ & $1$ & $7.16$ & $17.0$ & $10.6$ & $30.5$ & $41.5$ & $44.7$ & $25.2$ \\
Per-channel Quant & $3$ & $5.3$ & $6.72$ & $13.8$ & $7.64$ & $30.7$ & $39.0$ & $43.5$ & $23.5$ \\ 
ThinK & $3$ & $5.3$ & $9.25$ & $12.8$ & $9.44$ & $30.2$ & $40.5$ & $44.3$ & $\underline{24.4}$ \\ 
\rowcolor{lightmintbg} SVDq(ours) & $3$ & $5.3$ & $10.1$ & $19.6$ & $11.1$ & $30.7$ & $43.2$ & $42.2$ & $\textbf{26.1}$ \\ 
\rowcolor{lightmintbg} SVDq(ours) & $2$ & $\textbf{8.0}$ & $6.83$ & $13.6$ & $8.37$ & $30.9$ & $40.8$ & $38.1$ & $23.1$ \\ 
\hline
ShadowKV SparsityOnly & $16$ & $32$ & $7.99$ & $18.7$ & $10.6$ & $30.6$ & $40.6$ & $44.6$ & $25.5$ \\
ShadowKV & $2.5$ & $205$ & $7.46$ & $16.8$ & $12.3$ & $30.6$ & $41.4$ & $45.2$ & $25.6$ \\
\rowcolor{lightyellow} SVDq+Sparsity & $2.25$ & $227$ & $8.23$ & $19.3$ & $11.1$ & $31.1$ & $42.7$ & $45.7$ & $\underline{26.4}$ \\
\rowcolor{lightyellow} SVDq+Sparsity & $1.75$ & $291$ & $10.2$ & $18.7$ & $12.0$ & $30.8$ & $40.2$ & $45.0$ & $26.2$ \\
\rowcolor{lightyellow} SVDq+Sparsity & $1.25$ & $\textbf{410}$ & $8.49$ & $21.0$ & $12.5$ & $30.3$ & $41.3$ & $46.6$ & $\textbf{26.7}$ \\
\rowcolor{lightgreen} SVDq+Sparsity+V4 & $2.25$ & $227$ & $7.11$ & $16.4$ & $12.2$ & $30.6$ & $42.1$ & $47.3$ & $25.9$ \\ 
\rowcolor{lightgreen} SVDq+Sparsity+V4 & $1.75$ & $291$ & $7.88$ & $18.8$ & $12.7$ & $30.9$ & $41.8$ & $45.5$ & $26.3$ \\ 
\rowcolor{lightgreen} SVDq+Sparsity+V4 & $1.25$ & $\textbf{410}$ & $7.33$ & $16.7$ & $13.1$ & $30.6$ & $40.8$ & $42.4$ & $25.2$ \\ 
\midrule
\multicolumn{10}{c}{\textbf{Qwen2.5-7B-Instruct}} \\ 
\midrule
\rowcolor{lightlightgray} Default & $16$ & $1$ & $8.78$ & $11.2$ & $7.35$ & $31.5$ & $40.1$ & $49.3$ & $28.7$ \\
Per-channel Quant & $3$ & $5.3$ & $6.46$ & $12.3$ & $5.69$ & $30.6$ & $41.1$ & $44.3$ & $26.6$ \\ 
ThinK & $3$ & $5.3$ & $9.02$ & $11.6$ & $6.15$ & $31.1$ & $38.3$ & $54.1$ & $\textbf{29.8}$  \\ 
\rowcolor{lightmintbg} SVDq(ours) & $3$ & $5.3$ & $8.80$ & $11.3$ & $8.32$ & $31.1$ & $40.2$ & $48.9$ & $28.6$ \\ 
\rowcolor{lightmintbg} SVDq(ours) & $2$ & $\textbf{8.0}$ & $6.84$ & $19.9$ & $9.47$ & $31.9$ & $40.4$ & $48.5$ & $\underline{29.7}$ \\ 
\hline
ShadowKV SparsityOnly & $16$ & $32$ & $10.5$ & $10.5$ & $7.78$ & $31.8$ & $38.9$ & $49.9$ & $29.0$ \\
ShadowKV & $2.5$ & $205$ & $10.3$ & $12.0$ & $8.06$ & $30.9$ & $40.1$ & $49.1$ & $29.0$ \\
\rowcolor{lightyellow} SVDq+Sparsity & $2.25$ & $227$ & $11.3$ & $11.2$ & $7.10$ & $31.4$ & $41.5$ & $50.6$ & $29.6$ \\
\rowcolor{lightyellow} SVDq+Sparsity & $1.75$ & $291$ & $10.3$ & $11.5$ & $7.14$ & $31.5$ & $39.7$ & $52.1$ & $\underline{29.7}$ \\
\rowcolor{lightyellow} SVDq+Sparsity & $1.25$ & $\textbf{410}$ & $9.74$ & $11.0$ & $7.74$ & $31.5$ & $40.7$ & $51.5$ & $29.6$ \\
\rowcolor{lightgreen} SVDq+Sparsity+V4 & $2.25$ & $227$ & $10.5$ & $11.2$ & $8.49$ & $31.6$ & $40.5$ & $51.4$ & $\textbf{29.8}$ \\ 
\rowcolor{lightgreen} SVDq+Sparsity+V4 & $1.75$ & $291$ & $7.83$ & $10.5$ & $7.83$ & $31.3$ & $40.1$ & $53.5$ & $\textbf{29.8}$ \\ 
\rowcolor{lightgreen} SVDq+Sparsity+V4 & $1.25$ & $\textbf{410}$ & $9.59$ & $10.8$ & $7.37$ & $31.0$ & $40.7$ & $52.5$ & $\textbf{29.8}$ \\ 
\midrule
\multicolumn{10}{c}{\textbf{Qwen2.5-3B-Instruct}} \\ 
\midrule
\rowcolor{lightlightgray} Default & $16$ & $1$ & $6.87$ & $14.4$ & $10.1$ & $30.6$ & $37.6$ & $46.1$ & $27.8$ \\
Per-channel Quant & $3$ & $5.3$ & $6.32$ & $9.47$ & $4.13$ & $29.2$ & $35.6$ & $44.6$ & $25.3$ \\ 
ThinK & $3$ & $5.3$ & $6.39$ & $8.11$ & $5.72$ & $29.8$ & $36.3$ & $43.9$ & $25.2$ \\ 
\rowcolor{lightmintbg} SVDq(ours) & $3$ & $5.3$ & $7.33$ & $14.5$ & $7.55$ & $29.9$ & $35.8$ & $48.2$ & $\textbf{27.9}$ \\ 
\rowcolor{lightmintbg} SVDq(ours) & $2$ & $\textbf{8.0}$ & $3.26$ & $8.06$ & $5.17$ & $26.1$ & $35.3$ & $53.0$ & $\underline{27.0}$ \\ 
\hline
ShadowKV SparsityOnly & $16$ & $32$ & $8.32$ & $14.2$ & $8.54$ & $29.8$ & $37.7$ & $50.0$ & $\textbf{28.9}$ \\
ShadowKV & $2.5$ & $205$ & $7.19$ & $15.8$ & $9.04$ & $27.4$ & $37.5$ & $47.2$ & $27.8$ \\
\rowcolor{lightyellow} SVDq+Sparsity & $2.25$ & $227$ & $7.14$ & $15.2$ & $9.76$ & $30.0$ & $38.3$ & $46.7$ & ${28.1}$ \\
\rowcolor{lightyellow} SVDq+Sparsity & $1.75$ & $291$ & $7.51$ & $15.0$ & $7.27$ & $29.3$ & $37.6$ & $46.6$ & $27.5$ \\
\rowcolor{lightyellow} SVDq+Sparsity & $1.25$ & $\textbf{410}$ & $8.15$ & $14.9$ & $8.09$ & $29.3$ & $38.4$ & $48.2$ & $\underline{28.4}$\\
\rowcolor{lightgreen} SVDq+Sparsity+V4 & $2.25$ & $227$ & $7.22$ & $14.0$ & $8.45$ & $29.0$ & $38.2$ & $47.3$ & $27.8$ \\ 
\rowcolor{lightgreen} SVDq+Sparsity+V4 & $1.75$ & $291$ & $6.41$ & $14.3$ & $10.3$ & $29.2$ & $35.4$ & $48.1$ & $27.9$ \\ 
\rowcolor{lightgreen} SVDq+Sparsity+V4 & $1.25$ & $\textbf{410}$ & $7.68$ & $13.9$ & $8.26$ & $29.1$ & $37.4$ & $46.8$ & $27.6$ \\ 
\bottomrule
\end{tabular}
}
\caption{Results of the LongBench benchmarks \cite{bai2023longbench} (longer than $4$K). The experiment settings are the same as those for RULER benchmarks in Section \ref{sec:experiments}. The second row for each model, which includes the results for "ShadowKV SparsityOnly," "ShadowKV", three "SVDq+Sparsity", and three "SVDq+Sparsity+V4" configurations, corresponds to the results in Table \ref{tab:SVDspresult}. }
\label{tab:longbench}
\end{table*}

\section{Conclusions}\label{sec:conclusions}

We present a mixed precision quantization approach for \(KV\) cache compression, which is grounded in projection representation within the SVD and singular vector space. In this method, we assign higher quantization bit widths to the initial latent channels and gradually reduce the bit widths for subsequent latent channels. Additionally, there is an option to truncate the final channels. Through comprehensive experiments, we show that this approach outperforms direct per - channel quantization in terms of model performance, even when using lower mixed bit widths.
Moreover, we explore the performance of our proposed method when integrated with other \(KV\) cache compression techniques, such as sparsity and \(V\) cache quantization. Our results reveal that our method incurs minimal performance degradation, even when extremely low equivalent quantization bit widths (mixed \(1.75\) and \(1.25\) bits for the \(K\) cache) are utilized. Overall, these findings convincingly demonstrate the effectiveness and efficiency of our proposed method in \(K\) cache compression.

\section{Limitations}\label{sec:limitations}

Although our method demonstrates good effectiveness in $K$ cache compression, it primarily reduces the required memory space for model loading without directly addressing computational cost. In fact, our current implementation may even slightly increase inference time. 

Specifically, we utilize the pre-RoPE setting in our implementation. Our method extracts a quantized low-bit $K$ cache of the SVD projection representation before the application of Rotary Position Embeddings (RoPE) and shares this low-bit representation across all heads. Due to the online computation of RoPE, which depends on the incoming position index, the reconstruction from the projection representation to the original representation cannot be efficiently integrated into the model's forward pass. Consequently, this leads to an increase in computational cost for each head.

This increase in computational cost could potentially be remedied by switching to the post-RoPE setting, where $K$ cache is handled after the application of RoPE. However, as reported in ShadowKV work \citep{sun_shadowkv_2024} and observed in our numerical tests, the post-RoPE setting generally exhibits degraded performance compared to the pre-RoPE setting. 

Therefore, investigating methods to accelerate the computation of our SVD quantization method, potentially by exploring alternative approaches or optimizations within the pre-RoPE framework, is an interesting direction for future research.

\bibliographystyle{neurips_2024}
\bibliography{SVDq}

\newpage
\appendix

\section{Experiments Descriptions}
\label{sec:appendix}

\subsection{Descriptions for Section \ref{subsec:ResSVDq}}\label{app:5.1}

In this experiment, we include the below baselines for comparison: 

\noindent \textbf{Default} No compression is applied, and $16$-bit widths are used for all values. This is the default configuration of each models; 

\noindent \textbf{Direct $3$-bit Quantization} $3$-bit per-channel quantization \cite{liu_kivi_2023} is applied directly to the $\textbf{K}$ matrix in its original space (as depicted in Figure \ref{fig:origvar}). 

\noindent \textbf{ThinK} Direct channel truncation in the original space by ThinK \citep{xu_think_2024} that retains $\frac{3}{4}$ channels, in conjunction with $4$-bit quantization, results in an equivalent $3$-bit setting. 

\noindent The equivalent mixed quantization bit width in this experiment are selected as $\bar{b} = 3, 2$ for the SVDq method. The quantization schedule $\textbf{\textit{b}}$ is set to $(8, 4, 4, 4, 2, 2, 0, 0)$ and $(8, 4, 4, 0, 0, 0, 0, 0)$, respectively. 

\subsection{Descriptions for Section \ref{subsec:ResSVDqsp}}\label{app:5.2}

ShadowKV \cite{sun_shadowkv_2024} and its sparsity techniques act as baselines and utilized in this work. Briefly, this strategy divides the $K$ cache in the prefilling stage into small chunks, each containing $8$ tokens. It computes the mean embedding of each trunk as the landmark and then uses these landmarks to identify important chunks. Specifically, the top-$k$ chunks with the highest attention scores are considered important and retained, while the remaining chunks are neglected in the computation of attention. Note that this method also includes an auxiliary selection mechanism for outlier chunks, which are identified based on low cosine similarity. These outliers are not clipped during the sparsity process. In addition to sparsity, the full ShadowKV method incorporates SVD low-rank approximation of the $K$ cache, retaining $160$ out of the full $1024$ ranks. This low-rank approximation can be considered equivalent to approximately $2.5$-bit quantization, as the default numerical precision is $16$ bits.

Based on ShadowKV, the baseline results for comparison that shown in Table \ref{tab:SVDspresult} (the first three rows of each model) are:

\noindent \textbf{Default} Scores obtained with the default $16$-bit digital precision;

\noindent \textbf{Sparsity} Scores obtained using the ShadowKV sparsity method without low-rank approximation or quantization; 
 
\noindent \textbf{ShadowKV} Scores obtained using the full ShadowKV method, including both sparsity and equivalent $2.5$-bit quantization.

The detailed quantization schedules are shown in Table \ref{tab:bitschedule}.

\begin{table}[h]
\centering
\begin{tabular}{cc}
\hline
Equivalent bit $\bar{b}$ & schedule $\textit{\textbf{b}}$ \\
\hline
$2.25$ & $(8, 4, 4, 2, 0, 0, 0, 0)$ \\
$1.75$ & $(8, 4, 2, 0, 0, 0, 0, 0)$ \\
$1.25$ & $(4, 4, 2, 0, 0, 0, 0, 0)$ \\
\hline
\end{tabular}
\caption{\label{tab:bitschedule}
Key quantization bit schedules for SVDq.
}
\end{table}

\subsection{Descriptions and Results for Section \ref{subsec:ResSVDqspQV}}\label{app:5.3}

In this experiment, the configuration of $K$ cache compression and sparsity remains the same as in the second experiment: the mixed quantization bit schedules are set according to Table \ref{tab:bitschedule}, consistent with the previous experiment, and the sparsity method employs the ShadowKV sparsity technique \citep{sun_shadowkv_2024}. In addition to these settings, we observe the very weak low-rank property of $V$ cache and hence apply a direct $4$-bit per-token quantization to the $V$ cache.

\end{document}